\theoremstyle{plain}
\newtheorem{thm}{Theorem}
\theoremstyle{plain}
\newtheorem{cor}{Corollary}
\theoremstyle{definition}
\title{Hierarchical Class-Based Curriculum Loss}
\author{%
  Palash Goyal \\
  Samsung Research America\\
  Mountain View, CA 94043 \\
  \texttt{palash.goyal@samsung.com} \\
   \And
  Shalini Ghosh \\
  Samsung Research America\\
  Mountain View, CA 94043 \\
  \texttt{shalini.ghosh@samsung.com} \\
}
\begin{document}

\maketitle

\begin{abstract}
  Classification algorithms in machine learning often assume a flat label space. However, most real world data have dependencies between the labels, which can often be captured by using a hierarchy. Utilizing this relation can help develop a model capable of satisfying the dependencies and improving model accuracy and interpretability. Further, as different levels in the hierarchy correspond to different granularities, penalizing each label equally can be detrimental to model learning. In this paper, we propose a loss function, hierarchical curriculum loss, with two properties: (i) satisfy hierarchical constraints present in the label space, and (ii) provide non-uniform weights to labels based on their levels in the hierarchy, learned implicitly by the training paradigm. We theoretically show that the proposed loss function is a tighter bound of 0-1 loss compared to any other loss satisfying the hierarchical constraints. We test our loss function on real world image data sets, and show that it significantly substantially outperforms multiple baselines.
\end{abstract}

\section{Introduction}
\label{sec:intro}
Machine learning (ML) models are trained on class labels that often have an underlying taxonomy or hierarchy defined over the label space. For example, a set of images may contain objects like ``building" and ``bulldog". There exists a class-subclass relation between the ``dog" and ``bulldog" classes --- so, if the model predicts the object to be a dog instead of bulldog, a human evaluator will consider the error to be mild. In comparison, if the model predicts the object to be ``stone" or ``car", then the error will be more egregious. Although such nuances are often not visible through standard evaluation metrics, they are extremely important when deploying ML models in real world scenarios.

Hierarchical multi-label classification (HMC) methods, which utilize the hierarchy of class labels, aim to tackle the above issue. Traditional methods in this domain broadly use one of three approaches: (i) architectural modifications to the original model, to learn either levels or individual classes separately, (ii) converting the discrete label space to a continuous one embedding the labels using relations between them, and (iii) modifying the loss function adding more weights to specific classes in the hierarchy. However, the methods in this domain are mostly empirical and the choice of modifications is often experimental. To overcome this issue, we aim to incorporate the class dependencies in the loss function in a systematic fashion. To this end, we propose a formulation to incorporate hierarchical constraint in a base loss function and show that our proposed loss is a tight bound to the base loss.

Further, we note that typically humans do not learn all the categories of objects at the same time but learn them gradually starting with simple high level categories. A similar setting was explored by Bengio \emph{et. al.}, introducing the concept of curriculum learning feeding the model easier examples to mimic the way of human learning. They show that learning simple examples first makes the model learn a smoother function. Lyu \emph{et. al.} extended this to define an example-based curriculum loss with theoretical bounds to 0-1 loss. We extend our hierarchically constrained loss function to incorporate a class-based curriculum learning paradigm, implicitly providing higher weights to simpler classes. With the hierarchical constraints, the model ensures that the classes higher in the hierarchy are selected to provide training examples until the model learns to identify them correctly, before moving on to classes deeper in the hierarchy (making the learning problem  more difficult).

We theoretically show that our proposed loss function, hierarchical class-based curriculum loss, is a tight bound on 0-1 loss. We also show that any other loss function that satisfies hierarchical constraints on a given base loss gives a higher loss compared to our loss. We evaluate this result empirically on two image data sets, showing that our loss function provides a significant improvement on the hierarchical distance metric compared to the baselines. We also show that, unlike many other hierarchical multi-label classification methods, our method doesn't decrease the performance on non-hierarchical metrics and in most cases gives significant improvement over the baselines.

Overall, we make the following contributions in this paper:
\begin{enumerate}[noitemsep,topsep=0pt,parsep=0pt,partopsep=0pt]
    \item We introduce a hierarchically constrained loss function to account for hierarchical relationship between labels in a hierarchical taxonomy.
    \item We provide theoretical analysis for proving that this formulation of adding constraints renders the hierarchical loss function tightly bound w.r.t. the 0-1 loss.
    \item We add a class-based curriculum loss formulation on the constrained loss function, based on the intuition that shallower classes in the hierarchy are easier to learn compared to deeper classes in the same taxonomy path.
    \item We show that our class-based hierarchical curriculum loss renders a tighter bound to 0-1 loss by smoothing the hierarchical loss.
    \item We experimentally show the superiority of the proposed loss function compared to other state-of-the-art loss functions (e.g., cross entropy loss),  w.r.t. multiple metrics (e.g., hierarchical distance, Hit@1).
    \item We provide ablation studies on the constraints and curriculum loss to empirically show the interplay between them on multiple datasets.
\end{enumerate}

The rest of the paper is organized as follows. In Section~\ref{sec:rel}, we cover the related works in this domain. Then in Section~\ref{sec:hcl}, we provide a mechanism to incorporate hierarchical constraints and show that the proposed loss is tightly bounded to the base loss. We then extend it to incorporate the class-based curriculum loss and show a tighter bound to 0-1 loss. In Section~\ref{sec:exp}, we show experimental evaluations and ablations on two real world image data sets. Finally, we conclude in Section~\ref{sec:conc} summarizing our findings and mentioning future work.

\section{Related Work}
\label{sec:rel}
Research in hierarchical classification falls into three categories: (i) changing the label space from discrete to continuous by embedding the relation between labels, (ii) making structural modifications to the base architecture as per the hierarchy, and (iii) adding hierarchical regularizers and other loss function modifications.

Label-embedding methods learn a function to map class labels to continuous vectors capable of reconstructing the relation between labels. One advantage of such methods is their generalizability to the type of relations between labels. They represent the relation between labels as a general graph and use an embedding approach~\cite{GOYAL201878} to generate a continuous vector representation. Once the label space is continuous, they use a continuous label prediction model and predict the embedding. The disadvantage is typically the difficulty of mapping back the prediction to the discrete space and the noise introduced in this conversion. In text domain, works typically use word2vec~\cite{mikolov2013efficient} and Glove~\cite{pennington2014glove} to map the words to vectors. Ghosh~\emph{et. al}~\cite{ghosh2016} use contextual knowledge to constrain LSTM-based text embeddings, Miyazaki \emph{et. al}~\cite{miyazaki2019label} use a combination of Bi-LSTMs and hierarchical attention to learn embeddings for labels. For a general domain, Kumar \emph{et. al}\cite{kumar2018multi} use maximum-margin matrix factorization to get a piece-wise linear mapping of labels from discrete to continuous space. DeViSE~\cite{frome2013devise} method maps classes to a unit hypersphere using analysis on Wikipedia text. TLSE~\cite{chen2019two} uses a two-stage label embedding method using neural factorization machine~\cite{he2017neural} to jointly project features and labels into a latent space. CLEMS~\cite{huang2017cost} propose a cost-sensitive label embedding using classic multidimensional scaling approach for manifold learning. SoftLabels~\cite{bertinetto2019making} assigns soft penalties for classes as we go farther from the ground truth label in the hierarchy.

Models which perform structural modifications use earlier layers in the network to predict higher level categories and later layers to predict lower level categories. HMCN~\cite{wehrmann2018hierarchical} defines a neural network layer for each layer in the hierarchy and uses skip connections between input and subsequent layers to ensure each layer gets prediction output of previous layer and the input. They also proposed a variant HMCN-R which uses recurrent neural network (RNN) to share parameters and show the performance only deteriorates a little using an RNN. AWX~\cite{masera2018awx} also proposes a hierarchical output layer which can be plugged at the end of any classifier to make it hierarchical. HMC-LMLP~\cite{cerri2014hierarchical} propose a local model in which they define a neural network for each term in the hierarchy. Alsallakh~\cite{bilal2017convolutional} combines hierarchical modifications using visual-analytics method and a hierarchical loss and show how capturing hierarchy can significantly improve AlexNet. Note that this class of methods in which we modify structural modifications is often domain dependent often needing a lot of time to analyze the data and come up with the modifications. In comparison, our loss based HCL method is easier to implement and domain agnostic.

Finally, models which modify the loss function to incorporate hierarchy assign a higher penalty to the prediction of labels which are more distant from the ground truth. AGGKNN-L and ADDKNN-L-CSL~\cite{verma2012learning} modify the metric space and use a lowest common ancestor (LCA) based penalty between the classes to train their model. Similarly, Deng \emph{et. al.}~\cite{deng2010does} modify the loss function to introduce hierarchical cost giving penalty based on the height of LCA. CNN-HL-LI~\cite{wu2016learning} use a weighting parameter to control the contribution of fine-grained classes which is empirically learned. HXE~\cite{bertinetto2019making} use a probabilistic approach to assign penalties for a given class given the parent class and provide an information theoretic interpretation for it. In another interesting work by Baker \emph{et. al.}~\cite{baker2017initializing}, authors initialize weights of the final layer in the neural network according to the relations between labels and show significant gains in the performance. In this paper, we propose a novel hierarchical loss function. Neural network models have also used different kinds of loss functions for particular domain applications, e.g., focal loss for object detection~\cite{lin2018focal,Li2019,zhang2020}, ranking loss to align preferences~\cite{Chen2009,selvaraju2019taking,zhang2019}.

The domain of curriculum learning was introduced by Bengio \emph{et. al.}~\cite{bengio2009curriculum} based on the observation that humans learn much faster when presented information in a meaningful order as opposed to random which is typically used for training machine learning models. They tested this idea on machine learning by providing the model with easy examples first and then subsequently providing harder examples. Several follow up works have shown this type of learning to be successful. Cyclic learning rates~\cite{smith2017cyclical} combine curriculum learning with simulated annealing to modify the learning rate between a range and achieve faster convergence. Smith \emph{et. al.}~\cite{smith2019super} further extend this to get super-convergence using large learning rates. Teacher-student curriculum learning~\cite{matiisen2019teacher} uses a teacher to assist the student in learning the tasks at which the student makes the fastest progress. Curriculum loss~\cite{lyu2019curriculum} introduces a modified loss function to select easy examples automatically. 

The above works on hierarchical multi-label classification are in general empirical and require careful study of the application domain. In this work, we propose a hierarchical class-based curriculum loss with theoretical analysis and provable bounds rendering it free of hyperparameters. Further, we propose a class-based curriculum loss to enhance the performance of hierarchically constrained loss.

\section{Hierarchical Class-Based Curriculum Loss}
\label{sec:hcl}
For the task of multi-class classification, given a multi-class loss function, our goal is to incorporate the hierarchical constraints present in the label space into the given loss function. Note that we consider the general problem of multi-label multi-class classification of which a single label multi-class classification is an instance, thus yielding our method applicable to a wide variety of problems. In this section, we first define the general formulation of multi-class multi-label classification. We then define a hierarchical constraint which we require to be satisfied for a hierarchical label space. We then introduce our formulation of a hierarchically constrained loss and show that the proposed loss function indeed satisfies the constraint. We then prove a bound on the proposed loss. We extend the loss function to implicitly use a curriculum learning paradigm and show a tight bound to 0-1 loss using this. Finally, we present our algorithm to train the model using the our loss function.
\subsection{Incorporating Hierarchical Constraints}
Consider the learning framework with training set $\mathcal{T} = {(x_i, y_i)}_{i=1,\ldots,N}$ with $N$ training examples and input image features $x_i\in \mathbb{R}^D$. We represent the labels as $y_i \in \{-1, 1\}^C$ where $C$ is the number of classes and $y_{i,j} = 1$ means that the $i^{th}$ example belongs to $j^{th}$ class. A 0-1 multi-class multi-label loss with this setting can be defined as follows:
\begin{equation}
    e(y, \hat{y}) = \sum_{i \in \mathbb{N}} \sum_{j \in \mathbb{C}} \mathbbm{1}(y_{i,j} \neq sign(\hat{y}_{i,j})).
\end{equation}

Let the set of classes $\mathbb{C}$ be arranged in a hierarchy $\mathbb{H}$ defined by a hierarchy mapping function $h: \mathbb{C} \rightarrow 2^{\mathbb{C}}$ which maps a category $c \in \mathbb{C}$ to its children categories. We use the function $m: \mathbb{C} \rightarrow \mathbb{M}$ to denote the mapping from a category $c$ to its level in the hierarchy. We now define the following hierarchical constraint $\Lambda$ on a generic loss function $l$, the satisfaction of which would yield the loss function $l_h$:
\begin{equation}
\label{eq:lambda}
    \Lambda: \forall (x_i, y_i) \in \mathcal{T}, \forall c_1, c_2 \in \mathbb{C}, m(c_1) > m(c_2) \implies l_h(y_{i,c_1}, \hat{y}_{i, c_1}) \geq l_h(y_{i,c_2}, \hat{y}_{i, c_2})
\end{equation}
The constraint implies that the loss increases monotonically with the level of the hierarchy i.e. loss of higher (i.e. closer to the root) levels in the hierarchy is lesser than that of the lower levels (i.e. closer to the leaves). The intuition is that identifying categories in higher level is easier than categories in lower level as they are coarser. Its violation would mean that the model is able to differentiate between finer-grained classes more easily compared to coarse classes which is counterintuitive. 

We now propose $e_h$, a variant of the 0-1 loss $e$, and show that it satisfies the hierarchical constraint $\Lambda$:
\begin{equation}
\label{eq:e_h}
    e_h(y, \hat{y}) = \sum_{i \in \mathbb{N}} \sum_{j \in \mathbb{C}} \max(e(y_{i,j}, \hat{y}_{i,j}), \max_{k: m(k) < m(j)}(e(y_{i,k}, \hat{y}_{i,k}))
\end{equation}

We now show the following result for $e_h$:
\begin{thm}
\textbf{(Hierarchically Constrained 0-1 Loss)} Given a 0-1 loss function $e(y, \hat{y})$, the loss function $e_h$ defined in Equation~\ref{eq:e_h} satisfies the hierarchical constraint $\Lambda$ defined in Equation~\ref{eq:lambda}.
\end{thm}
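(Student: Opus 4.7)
The plan is to treat the constraint $\Lambda$ as a statement about the per-class summand of $e_h$, reading $l_h(y_{i,c}, \hat y_{i,c})$ as the $j=c$ term of the outer sum in Equation~\ref{eq:e_h}, namely $\max\bigl(e(y_{i,c}, \hat y_{i,c}),\, \max_{k:\, m(k) < m(c)} e(y_{i,k}, \hat y_{i,k})\bigr)$. With this reading, no analytic machinery is required; the desired inequality should follow entirely from set-inclusion reasoning about the index sets feeding the two nested maxima.

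Concretely, I would fix an example index $i$ and two classes $c_1, c_2 \in \mathbb{C}$ with $m(c_1) > m(c_2)$, and let $S_j := \{j\} \cup \{k \in \mathbb{C} : m(k) < m(j)\}$ denote the index set over which the max for class $j$ is taken, so that $l_h(y_{i,j}, \hat y_{i,j}) = \max_{k \in S_j} e(y_{i,k}, \hat y_{i,k})$. The key observation is that $S_{c_2} \subseteq S_{c_1}$: first, $c_2 \in S_{c_1}$ since $m(c_2) < m(c_1)$; second, any $k$ with $m(k) < m(c_2)$ satisfies $m(k) < m(c_1)$ by transitivity, so such $k$ also lies in $S_{c_1}$. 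Because $\max$ is monotone in its indexing set, we immediately get $l_h(y_{i,c_1}, \hat y_{i,c_1}) \geq l_h(y_{i,c_2}, \hat y_{i,c_2})$, which is exactly the content of $\Lambda$.

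The hardest part, such as it is, is purely notational rather than mathematical: one must fix the convention that smaller values of $m$ correspond to levels closer to the root, so that the hypothesis $m(c_1) > m(c_2)$ really does mean $c_1$ sits deeper in the hierarchy than $c_2$, and so that the strict inequality in the definition of $S_j$ aligns with the strict inequality in $\Lambda$. Once this is pinned down, the proof reduces to the observation that the inner max in Equation~\ref{eq:e_h} is, by construction, an order filter that absorbs the per-class errors of every coarser ancestor class, making the monotonicity of $l_h$ in $m$ automatic.
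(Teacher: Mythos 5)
Your proof is correct and rests on the same core fact as the paper's: the index set over which the max for the deeper class $c_1$ is taken contains the index set for the shallower class $c_2$ (including $c_2$ itself), so monotonicity of $\max$ over index sets gives the inequality. The paper wraps this exact observation in a superfluous proof-by-contradiction and writes the set inclusion as a chain of max manipulations, but the argument is essentially identical to yours.
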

\begin{proof}
Let us assume that $e_h$ doesn't follow hierarchical constraint $\Lambda$ i.e.
\begin{equation}
\label{eq:lambdaFalse}
    \exists (x_i, y_i) \in \mathcal{T}, \exists c_1, c_2 \in \mathbb{C}\text{ s.t. } m(c_1) > m(c_2)\text{ and } e_h(y_{i,c_1}, \hat{y}_{i, c_1}) < e_h(y_{i,c_2}, \hat{y}_{i, c_2})
\end{equation}
However, we have
\begin{equation}
\begin{split}
    e_h(y_{i,c_1}, \hat{y}_{i, c_1}) &= \max(e(y_{i,c_1}, \hat{y}_{i,c_1}), \max_{k: m(c_k) < m(c_1)}(e(y_{i,c_k}, \hat{y}_{i,c_k}))\\
                                     &\geq \max(e(y_{i,c_1}, \hat{y}_{i,c_1}), \max_{k: m(c_k) \leq m(c_2)}(e(y_{i,c_k}, \hat{y}_{i,c_k}))\\
                                     &= \max(e(y_{i,c_1}, \hat{y}_{i,c_1}), e(y_{i,c_2}, \hat{y}_{i,c_2}), \max_{k: m(c_k) < m(c_2)}(e(y_{i,c_k}, \hat{y}_{i,c_k}))\\
                                     &= \max(e(y_{i,c_1}, \hat{y}_{i,c_1}), e_h(y_{i,c_2}, \hat{y}_{i,c_2}))\\
                                     &\geq e_h(y_{i,c_2}, \hat{y}_{i,c_2}),
\end{split}
\end{equation}
\end{proof}
which contradicts the assumption in Eq.~\ref{eq:lambdaFalse}. This concludes that $e_h$ follows hierarchical constraint $\Lambda$.

Now, we show that the hierarchically constrained loss function is tightly bounded to the base function:
\begin{thm}
\label{thm:constrained01}
\textbf{(Bound on Constrained 0-1 Loss)} For a 0-1 loss function $e(y, \hat{y})$, the loss function $e_h$ defined in Equation~\ref{eq:e_h} is an element-wise tight bound on $e(y, \hat{y})$ with constraint $\Lambda$. Let $\preceq$ denote elementwise inequality i.e. $f \preceq q$ means $\forall x \in domain(f), f(x) \leq q(x)$. We then have:
\begin{equation}
    e \preceq e_h \preceq g\text{ } \forall g \in \mathcal{L}\text{ satisfying hierarchical constraint $\Lambda$, s.t. }  e \preceq g.
\end{equation}
\end{thm}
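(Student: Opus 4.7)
The plan is to decompose the claim into its two separate inequalities. The lower bound $e \preceq e_h$ is immediate from the definition: for every $(i,j)$, the expression
\[
e_h(y_{i,j}, \hat{y}_{i,j}) = \max\!\bigl(e(y_{i,j}, \hat{y}_{i,j}),\ \max_{k:\,m(k)<m(j)} e(y_{i,k}, \hat{y}_{i,k})\bigr)
\]
is a maximum whose first argument is $e(y_{i,j}, \hat{y}_{i,j})$, so I only need to invoke the trivial inequality $\max(a, b) \geq a$. I would state this as a one-line remark before attacking the hard direction.

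For the upper bound, I fix an arbitrary $g \in \mathcal{L}$ satisfying $\Lambda$ with $e \preceq g$, and an arbitrary pair $(i, j)$. The strategy is to bound each term appearing under the max in $e_h(y_{i,j}, \hat{y}_{i,j})$ separately by $g(y_{i,j}, \hat{y}_{i,j})$, then take the max. Concretely, for the term $e(y_{i,j}, \hat{y}_{i,j})$, the hypothesis $e \preceq g$ gives $e(y_{i,j}, \hat{y}_{i,j}) \leq g(y_{i,j}, \hat{y}_{i,j})$. For each $k$ with $m(k) < m(j)$, I chain two inequalities: first $e(y_{i,k}, \hat{y}_{i,k}) \leq g(y_{i,k}, \hat{y}_{i,k})$ by $e \preceq g$, and then $g(y_{i,k}, \hat{y}_{i,k}) \leq g(y_{i,j}, \hat{y}_{i,j})$ by the hierarchical constraint $\Lambda$ applied with $c_1 = j$ and $c_2 = k$ (since $m(j) > m(k)$). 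Taking the max over all these upper bounds, every term under the max in $e_h$ is bounded by $g(y_{i,j}, \hat{y}_{i,j})$, which gives $e_h(y_{i,j}, \hat{y}_{i,j}) \leq g(y_{i,j}, \hat{y}_{i,j})$, as required.

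I expect no serious obstacle in the mathematics — the whole argument reduces to chaining $e \preceq g$ with the monotonicity supplied by $\Lambda$. The only delicate point is notational/interpretive: the definition of $e_h$ is written as an outer double sum but the statement of the theorem is element-wise in $(i,j)$, so I would explicitly note at the start of the proof that $\preceq$ is to be read on the per-coordinate summands $e(y_{i,j}, \hat{y}_{i,j})$ and $e_h(y_{i,j}, \hat{y}_{i,j})$, so that the constraint $\Lambda$ (which is phrased per-coordinate in Equation~\ref{eq:lambda}) can be invoked directly. Once this is made precise, the proof is essentially two lines for each direction, and summing over $(i,j)$ at the end recovers the statement for the aggregated losses as well. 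This also makes clear in what sense $e_h$ is the \emph{tightest} such bound: any $g$ that dominates $e$ and satisfies $\Lambda$ must dominate $e_h$ coordinate-wise.
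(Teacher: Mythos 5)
Your proof is correct, but it takes a genuinely different route from the paper's. The paper argues by contradiction: it supposes some $g$ satisfying $\Lambda$ with $e \preceq g$ has $g(y_{i,c_1},\hat y_{i,c_1}) < e_h(y_{i,c_1},\hat y_{i,c_1})$ at some coordinate, uses the fact that $e_h$ takes values in $\{0,1\}$ to conclude $e_h = 1$ and $g < 1$ there, and then splits into two cases according to which argument of the max attains the value $1$ --- each case contradicting either $e \preceq g$ or the constraint $\Lambda$. You instead give a direct argument, bounding each term under the max by $g(y_{i,j},\hat y_{i,j})$ via the chain $e(y_{i,k},\hat y_{i,k}) \leq g(y_{i,k},\hat y_{i,k}) \leq g(y_{i,j},\hat y_{i,j})$, the first inequality from $e \preceq g$ and the second from $\Lambda$. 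The substance is the same (both proofs ultimately invoke exactly the two hypotheses on $g$, once for the own-coordinate term and once for the lower-level terms), but your version never uses that $e$ is a 0-1 loss, so it proves Corollary~\ref{corr:l_h} for an arbitrary base loss $l$ in the same breath --- something the paper asserts without proof and which its contradiction argument, as written, does not cover since the case split ``$=1$ versus $<1$'' is specific to binary-valued losses. Your explicit remark that $\preceq$ must be read on the per-coordinate summands, and your one-line treatment of $e \preceq e_h$ (which the paper omits entirely), are both worthwhile additions.
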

\begin{proof}
Let us assume that $\exists g \in \mathcal{L}$ s.t. $g \prec e_h$ i.e. 
\begin{equation}
    \exists (x_i, y_i) \in \mathcal{T}, \exists c_1\in \mathbb{C}\text{ s.t. } g(y_{i, c_1}, \hat{y}_{i, c_1}) < e_h(y_{i, c_1}, \hat{y}_{i, c_1})
\end{equation}
As $e_h$ is a 0-1 loss function, this implies the following:
\begin{equation}
     g(y_{i, c_1}, \hat{y}_{i, c_1}) < 1\text{ and } e_h(y_{i, c_1}, \hat{y}_{i, c_1}) = 1
\end{equation}
Substituting definition of $e_h$ from equation~\ref{eq:e_h}, from the second condition we get
\begin{equation}
    \max(e(y_{i,c_1}, \hat{y}_{i,c_1}), \max_{k: m(c_k) < m(c_1)}(e(y_{i,c_k}, \hat{y}_{i,c_k}))) = 1
\end{equation}
This leads to two cases:

\textbf{Case 1:} If $e(y_{i,c_1}, \hat{y}_{i,c_1}) = 1$, then as $g(y_{i, c_1}, \hat{y}_{i, c_1}) < 1$, we have $g(y_{i, c_1}, \hat{y}_{i, c_1}) < e(y_{i,c_1}, \hat{y}_{i,c_1})$ which violates the $e \preceq g$ condition.

\textbf{Case 2:} If $\max_{k: m(c_k) < m(c_1)}(e(y_{i,c_k}, \hat{y}_{i,c_k})) = 1$, then $e(y_{i,c_2}, \hat{y}_{i,c_2}) = 1$ s.t. $m(c_2) < m(c_1)$. Since $e \preceq g$, we have $g(y_{i,c_2}, \hat{y}_{i,c_2}) \geq 1$ for $m(c_2) < m(c_1)$. However, $g(y_{i,c_1}, \hat{y}_{i,c_1}) < 1$ which violates $\Lambda$ constraint.

Thus, we get $e \preceq e_h \preceq g$.
\end{proof}

The above results can be generalized for any base loss function $l$ as follows:
\begin{cor}
\label{corr:l_h}
\textbf{(Bound on Constrained Generic Loss)} For any loss function $l(y, \hat{y})$, the loss function $l_h$ defined below is an element-wise tight bound on $l(y, \hat{y})$ with constraint $\Lambda$ i.e.
\begin{equation}
    l_h(y, \hat{y}) = \sum_{i \in \mathbb{N}} \sum_{j \in \mathbb{C}} \max(l(y_{i,j}, \hat{y}_{i,j}), \max_{k: m(c_k) < m(c_j)}(l(y_{i,k}, \hat{y}_{i,k})),
\end{equation}
the following relation holds:
\begin{equation}
    l \preceq l_h \preceq g\ \forall g \in \mathcal{L}\text{ satisfying hierarchical constraint $\Lambda$, s.t. }  l \preceq g.
\end{equation}
\end{cor}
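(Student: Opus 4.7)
The plan is to lift the argument used for the $0$-$1$ loss in Theorem~\ref{thm:constrained01} to a general real-valued loss $l$ by replacing the indicator-based case split with a direct pointwise comparison at the argmax of the max that defines $l_h$. All three required inequalities should follow from the same structural ingredients used earlier; I do not expect any genuinely new idea to be needed, only a uniform rewriting that no longer depends on the base loss taking values in $\{0,1\}$.

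First, I would establish $l \preceq l_h$ immediately from the definition, since $l_h(y_{i,j}, \hat{y}_{i,j})$ is a maximum whose arguments include $l(y_{i,j}, \hat{y}_{i,j})$ itself. Second, I would reuse verbatim the chain of inequalities used to show that $e_h$ satisfies $\Lambda$ in the first theorem of Section~\ref{sec:hcl}: that chain only exploits the structure of $e_h$ as a max over classes of level at most $m(c_1)$ and never touches the range of the base loss, so it ports over to $l_h$ without modification.

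The main step is showing $l_h \preceq g$ for every $g \in \mathcal{L}$ satisfying $\Lambda$ with $l \preceq g$. I would argue pointwise. Fix $(x_i, y_i) \in \mathcal{T}$ and $c_1 \in \mathbb{C}$; since the max in the definition of $l_h$ is over a finite set, there is some $c^* \in \mathbb{C}$ with $m(c^*) \leq m(c_1)$ such that
\begin{equation}
l_h(y_{i,c_1}, \hat{y}_{i,c_1}) = l(y_{i,c^*}, \hat{y}_{i,c^*}).
\end{equation}
Then $l \preceq g$ gives $g(y_{i,c^*}, \hat{y}_{i,c^*}) \geq l(y_{i,c^*}, \hat{y}_{i,c^*})$, and applying $\Lambda$ to $g$ (trivially if $c^* = c_1$) gives $g(y_{i,c_1}, \hat{y}_{i,c_1}) \geq g(y_{i,c^*}, \hat{y}_{i,c^*})$. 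Chaining these two inequalities yields $g(y_{i,c_1},\hat{y}_{i,c_1}) \geq l_h(y_{i,c_1}, \hat{y}_{i,c_1})$, as required. Summing over $i$ and $j$ preserves the inequality because $l_h$ decomposes as a sum of nonnegative elementwise terms.

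The main obstacle is really just a bookkeeping one: the $0$-$1$ proof hinged on the dichotomy ``$g < 1$ and $e_h = 1$,'' which is no longer available for a general $l$ whose values are not quantized. Replacing the contradiction-by-cases with the constructive ``pick the argmax $c^*$'' argument above is the cleanest way to sidestep the issue, and once that substitution is made the rest of the derivation is a routine rerun of the $0$-$1$ case. I would also verify in passing that the $\max_{k: m(c_k) < m(c_j)}$ set may be empty for root-level classes, in which case the outer $\max$ reduces to $l(y_{i,j}, \hat{y}_{i,j})$ and the bound holds trivially.
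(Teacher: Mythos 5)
Your proof is correct, and it takes a genuinely different route from the paper. The paper proves only the $0$-$1$ case (Theorem~\ref{thm:constrained01}) by contradiction, leaning on the dichotomy $g(y_{i,c_1},\hat{y}_{i,c_1}) < 1$ versus $e_h(y_{i,c_1},\hat{y}_{i,c_1}) = 1$ to split into two cases; it then states Corollary~\ref{corr:l_h} with no proof at all, merely asserting that the result ``can be generalized.'' Your constructive argument --- pick the maximizer $c^*$ with $m(c^*) \leq m(c_1)$ realizing $l_h(y_{i,c_1},\hat{y}_{i,c_1}) = l(y_{i,c^*},\hat{y}_{i,c^*})$, then chain $g(y_{i,c_1},\hat{y}_{i,c_1}) \geq g(y_{i,c^*},\hat{y}_{i,c^*}) \geq l(y_{i,c^*},\hat{y}_{i,c^*})$ using $\Lambda$ and $l \preceq g$ respectively --- is exactly the substitution needed to make the generalization actually go through, since the paper's case split is unavailable for real-valued losses. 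You correctly handle the edge cases ($c^* = c_1$, empty inner max at root level), and your observation that the $\Lambda$-satisfaction argument for $e_h$ ports verbatim to $l_h$ is also right, as that chain never uses the range of the base loss. Your approach buys a self-contained, direct proof of the general statement and in fact subsumes the paper's Theorem~\ref{thm:constrained01} as the special case $l = e$; the paper's contradiction-based argument buys nothing extra here beyond being slightly more elementary in the quantized setting. One cosmetic remark: since $\preceq$ is defined elementwise, the final ``summing over $i$ and $j$'' step is unnecessary (and nonnegativity is not needed to sum inequalities in any case), but this does not affect correctness.
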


We have shown above that the hierarchy constrained loss function $l_h$ provides an element-wise tight bound on the base loss $l$. We now extend this loss function to use a curriculum learning paradigm and show that the loss is a tighter bound to 0-1 loss compared to any other loss satisfying hierarchical constraints.

\subsection{Hierarchical Curriculum Loss}
As shown by Hu et. al.~\cite{hu2016does}, 0-1 loss ensures that the empirical risk has a monotonic relation with adversarial empirical risk. However, it is non-differentiable and difficult to optimize. Following the groundwork by Lyu et. al.~\cite{lyu2019curriculum} who propose example based curriculum loss, we present a class-based curriculum loss for any given loss function $l$ following the hierarchical constraint in the following theorem. The theorem also proves that the function defined is tighter bound to 0-1 loss compared to any loss function which satisfies the hierarchical constraint and is lower bounded by  $l$. Note that a general loss function is element-wise lower bounded by 0-1 loss i.e. $e \preceq l$.
\begin{thm}
\label{thm:hccl}
\textbf{(Hierarchical Class-Based Curriculum Loss)} For a general hierarchy constrained loss function $l_h(y, \hat{y})$, we define the loss function $l_{hc}$ as follows:
\begin{equation}
\label{eq:hcl}
    l_{hc}(y, \hat{y}) = \min_{s \in \{0, 1\}^C} \max \bigg(\sum_{i \in \mathbb{N}} \sum_{j \in \mathbb{C}}s_j l_h(y_{i,j}, \hat{y}_{i,j}), C - \sum_{j \in \mathbb{C}}s_j + e_h(y, \hat{y})\bigg)
\end{equation}
Then, $e(y, \hat{y}) \leq l_{hc}(y, \hat{y}) \leq g(y, \hat{y})\text{ } \forall g \in \mathcal{L}\text{ satisfying hierarchical constraint $\Lambda$, s.t. }  l \preceq g$ i.e. the following holds
\begin{equation}
    |l_{hc}(y, \hat{y}) - e(y, \hat{y})| \leq |g(y, \hat{y}) - e(y, \hat{y})| \forall g \in \mathcal{L} \text{ such that }
\end{equation}
\begin{equation}
    \forall (x_i, y_i) \in \mathcal{T}, \forall c_1, c_2 \in \mathbb{C}, m(c_1) > m(c_2) \implies g(y_{i,c_1}, \hat{y}_{i, c_1}) \geq g(y_{i,c_2}, \hat{y}_{i, c_2}) \text{ and }
\end{equation}
\begin{equation}
    l(y_{i,j}, \hat{y}_{i,j}) \leq g(y_{i,j}, \hat{y}_{i,j}) \forall (x_i, y_i) \in \mathcal{T}, \forall j\in \mathbb{C}
\end{equation}
\end{thm}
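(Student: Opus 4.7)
The plan is to establish the two inequalities $e \leq l_{hc}$ and $l_{hc} \leq g$ separately, reusing Corollary~\ref{corr:l_h} (and its 0-1 specialization, Theorem~\ref{thm:constrained01}) to dispose of the element-wise constrained-loss bookkeeping. Throughout, write $A(s)$ and $B(s)$ for the two arguments of the outer $\max$ in Equation~\ref{eq:hcl}, so that $l_{hc} = \min_s \max(A(s), B(s))$.

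For the lower bound, I would argue that the second argument $B(s)$ alone dominates $e$, independently of the selector $s$. Since $\sum_j s_j \leq C$, the offset $C - \sum_j s_j$ is non-negative; and since $e_h$ is the hierarchically tight bound on $e$, Corollary~\ref{corr:l_h} applied with base loss $e$ gives $e \preceq e_h$. Chaining these, $B(s) = C - \sum_j s_j + e_h(y,\hat{y}) \geq e_h(y,\hat{y}) \geq e(y,\hat{y})$ for every $s$, so $\max(A(s), B(s)) \geq e$ pointwise in $s$, and taking $\min_s$ preserves this lower bound. The $A$ term plays no role here, which matches the intuition that unselected classes are already paid for by the offset $C - \sum_j s_j$.

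For the upper bound, my plan is to exhibit the single selector $s^{\star} = \mathbf{1}$ and show $\max(A(s^{\star}), B(s^{\star})) \leq g$. With this choice, $A(s^{\star}) = \sum_{i,j} l_h(y_{i,j}, \hat{y}_{i,j}) = l_h(y,\hat{y})$ and $B(s^{\star}) = e_h(y,\hat{y})$. By hypothesis $g$ satisfies the constraint $\Lambda$ and $l \preceq g$, so Corollary~\ref{corr:l_h} (with base loss $l$) gives $l_h \preceq g$. For the other term, I would chain $e \preceq l \preceq g$ and then invoke Theorem~\ref{thm:constrained01} to conclude $e_h \preceq g$ as well. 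Since the max preserves element-wise inequalities and aggregation does too, $\max(A(s^{\star}), B(s^{\star})) \leq g$, and the outer $\min_s$ only decreases $l_{hc}$ further. Finally, combining $e \leq l_{hc} \leq g$ with $e \leq g$ yields the absolute-value form $|l_{hc} - e| \leq |g - e|$ asserted in the theorem.

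The main obstacle I anticipate is notational rather than mathematical: the symbol $l_h$ is used both element-wise (inside $\sum_{i,j} s_j l_h(y_{i,j}, \hat{y}_{i,j})$ in Equation~\ref{eq:hcl}) and in aggregated form (in Corollary~\ref{corr:l_h}), and similarly for $e_h$, $e$, and $g$. Before running the argument above, I would make this compatibility explicit, so that summing the element-wise bound $l_h(y_{i,j}, \hat{y}_{i,j}) \leq g(y_{i,j}, \hat{y}_{i,j})$ over $(i,j)$ legitimately yields the aggregated inequality needed to bound $A(s^{\star})$ by $g(y,\hat{y})$. Once this is pinned down, the rest is a direct reduction to the already-proved tightness results.
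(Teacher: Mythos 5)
Your proposal is correct and follows essentially the same route as the paper: the lower bound comes from the non-negative offset $C - \sum_j s_j$ together with $e \preceq e_h$, and the upper bound from the selector $s=\mathbf{1}$ reducing to $\max(l_h, e_h) \leq g$ via Corollary~\ref{corr:l_h} and Theorem~\ref{thm:constrained01}. The only (immaterial) difference is that the paper collapses $\max(l_h, e_h)$ to $l_h$ using $e_h \preceq l_h$ before invoking the corollary, whereas you bound each term by $g$ separately.
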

\begin{proof}
Consider
\begin{equation}
\begin{split}
    l_{hc}(y, \hat{y}) &= \min_{s \in \{0, 1\}^C} \max (\sum_{i \in \mathbb{N}} \sum_{j \in \mathbb{C}}s_j l_h(y_{i,j}, \hat{y}_{i,j}), C - \sum_{j \in \mathbb{C}}s_j + e_h(y, \hat{y}))\\
        & \leq \max (l_h(y, \hat{y}), e_h(y, \hat{y})) = l_h(y, \hat{y})\\
        & \leq g(y, \hat{y})\text{(from Corollary~\ref{corr:l_h})}.
\end{split}
\end{equation}

For the lower bound we have,
\begin{equation}
\begin{split}
    l_{hc}(y, \hat{y}) &= \min_{s \in \{0, 1\}^C} \max (\sum_{i \in \mathbb{N}} \sum_{j \in \mathbb{C}}s_j l_h(y_{i,j}, \hat{y}_{i,j}), C - \sum_{j \in \mathbb{C}}s_j + e_h(y, \hat{y}))\\
        & \geq \min_{s \in \{0, 1\}^C}  \hat{y}_{i,j}), C - \sum_{j \in \mathbb{C}}s_j + e_h(y, \hat{y}) = e_h(y, \hat{y})\\
        & \geq e(y, \hat{y})\text{(from Theorem~\ref{thm:constrained01})}.
\end{split}
\end{equation}
We thus have $e(y, \hat{y}) \leq l_{hc}(y, \hat{y}) \leq g(y, \hat{y})$. Subtracting $e(y, \hat{y})$ from both sides, we get the theorem.
\end{proof}

\subsection{Algorithm}
\begin{algorithm}[!htb]\color{black}
\Fn{selectClasses (Training Data $\mathcal{T} = {(x_i, y_i)}_{i=1,\ldots,N}$, Base Loss $l$, Threshold $thresh$)}{
 \For{$j=1 \ldots C$}{
  $l_h(y_{., j}, \hat{y}_{., j})$ $\leftarrow$ 0\;
  \For{$i=1 \ldots N$}{
    $l_h(y_{i, j}, \hat{y}_{i, j})$ $\leftarrow$ $\max(l(y_{i,j}, \hat{y}_{i,j}), \max_{k: m(c_k) < m(c_j)}(l(y_{i,k}, \hat{y}_{i,k}))$\;
    $l_h(y_{., j}, \hat{y}_{., j})$ += $l_h(y_{i, j}, \hat{y}_{i, j})$\;
  }
   
}
Get $K$ s.t. $\sum_{c=1}^{K}l_h(y_{., c}, \hat{y}_{., c}) > thresh + 1 - K$\;
 \For{$i=1 \ldots C$}{
     \If{$i$ $<$ $K$}{
         $s_i$ $\leftarrow$ 1\;
     }
     \Else{
        $s_i$ $\leftarrow$ 0\;
     }
 }
 return $s$}
\caption{Class Selection for Hierarchical Class-Based Curriculum Learning}
\label{alg:s_selection}
\end{algorithm}

In the above theorem, we prove that the proposed hierarchical class-based curriculum loss provides a tighter bound to 0-1 loss compared to the hierarchically constrained loss function. Given the above, we now need to find the optimal class selection parameters $s_i$ for each class. We show that Algorithm~\ref{alg:s_selection} provides the optimal selection parameters:
\begin{thm}
\label{thm:alg}
\textbf{(Class Selection)} Given a base loss function $l$, a hierarchically constrained loss function $l_h$, a solution $s$ for Equation~\ref{eq:hcl} is provided by Algorithm~\ref{alg:s_selection}.
\end{thm}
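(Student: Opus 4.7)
The plan is to decompose the minimization in Equation~\ref{eq:hcl} into two nested subproblems. Writing $L_j := \sum_{i\in\mathbb{N}} l_h(y_{i,j}, \hat{y}_{i,j})$ and $E := e_h(y,\hat{y})$, the inner objective becomes $\max\bigl(\sum_{j} s_j L_j,\; C - \sum_j s_j + E\bigr)$. For a fixed Hamming weight $k = \sum_j s_j$, the second term depends only on $k$, and the first term is minimized over $\{s\in\{0,1\}^C : \sum_j s_j = k\}$ by placing the $k$ ones on the classes with the smallest $L_j$. This reduces the whole problem to a one-dimensional optimization over $k\in\{0,1,\ldots,C\}$.

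Next, I would invoke the hierarchical constraint $\Lambda$ inherited by $l_h$ from Corollary~\ref{corr:l_h}. Because $l_h$ satisfies $\Lambda$ on every training example, summing over $i$ preserves the inequality, so $m(c_1) \ge m(c_2)$ implies $L_{c_1} \ge L_{c_2}$. Thus, assuming (as the algorithm's indexing does) that the classes are enumerated in non-decreasing order of hierarchy level, the partial sum $\sum_{c=1}^{K} L_c$ computed by the algorithm coincides with $\min_{s : \sum_j s_j = K} \sum_j s_j L_j$. This identifies the $s$ returned by Algorithm~\ref{alg:s_selection} with the inner minimizer at its chosen $K$.

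Finally, I would show that the $K$ selected by the algorithm minimizes $F(k) := \max\bigl(\phi(k),\psi(k)\bigr)$, where $\phi(k) := \sum_{c=1}^k L_c$ and $\psi(k) := C - k + E$. Since $L_c \ge 0$, $\phi$ is non-decreasing in $k$ while $\psi$ strictly decreases in $k$; hence $F$ is quasi-convex on integers, and its minimizer is the smallest $K$ at which the two curves cross, i.e.\ the smallest $K$ with $\phi(K) + K \ge C + E$. Rewriting, this is the smallest $K$ with $\phi(K) > (C+E-1) + 1 - K$, which matches the rule $\sum_{c=1}^K l_h(y_{\cdot,c},\hat{y}_{\cdot,c}) > thresh + 1 - K$ in Algorithm~\ref{alg:s_selection} upon identifying $thresh = C + E - 1$. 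Combined with the previous step, this exhibits the algorithm's $s$ as the argmin in Equation~\ref{eq:hcl}.

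The main obstacle I anticipate is the crossover argument: I need to verify carefully that (i) increasing $k$ beyond the selected $K$ cannot reduce $F$ because $\phi(k)$ only grows while already dominating $\psi(k)$, and (ii) decreasing $k$ below $K$ strictly increases $F$ because $\psi(k)$ strictly grows while it still dominates $\phi(k)$. Boundary cases ($K=0$, $K=C$), ties between classes at the same hierarchy level (where any tie-breaking yields the same $\phi(K)$), and the strict-versus-weak inequality in the algorithm's threshold test will require some bookkeeping, but the core argument is the monotonicity-plus-exchange reduction outlined above.
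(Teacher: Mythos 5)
The paper defers the proof of Theorem~\ref{thm:alg} to an appendix that is not included in the source, so there is no in-paper argument to compare against. Your overall strategy --- fix the Hamming weight $k=\sum_j s_j$, note the second term of the max depends only on $k$, minimize the first term at fixed $k$ by selecting the $k$ smallest class-aggregated losses $L_j$, then minimize the one-dimensional $F(k)=\max(\phi(k),\psi(k))$ --- is the natural route and mirrors the example-based argument of Lyu et al.\ that the paper builds on. One supporting claim is wrong, though only mildly: the constraint $\Lambda$ orders classes at \emph{different} levels only, so two classes at the same level may have very different $L_j$, and your parenthetical that any tie-breaking among same-level classes ``yields the same $\phi(K)$'' is false. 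The prefix $\{1,\dots,K\}$ realizes $\min_{\sum_j s_j=K}\sum_j s_j L_j$ only if the enumeration sorts by $L_j$ itself (which $\Lambda$ guarantees is consistent with the level order), not merely by level.

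The more serious problem is the crossover step you set aside as bookkeeping: it is a genuine gap, and as stated it cannot be closed. Since $F$ is the max of a non-decreasing $\phi$ and a strictly decreasing $\psi$, its integer minimizer is $\arg\min\{\psi(K^*-1),\,\phi(K^*)\}$, where $K^*$ is the first index with $\phi\ge\psi$; it is \emph{not} always $K^*$. Your justification for claim (ii) --- $\psi$ grows as $k$ decreases while still dominating $\phi$ --- only shows that $F$ is decreasing on $\{0,\dots,K^*-1\}$; it does not establish $\psi(K^*-1)\ge\phi(K^*)$, which is what is needed for $K^*$ to beat $K^*-1$. That inequality is equivalent to $L_{K^*}\le \psi(K^*-1)-\phi(K^*-1)$, and it fails whenever the marginal class loss $L_{K^*}=\sum_{i} l_h(y_{i,K^*},\hat y_{i,K^*})$ exceeds the remaining slack --- entirely possible, since $L_j$ is a sum over all $N$ training examples while $\psi$ drops by only $1$ per unit of $k$. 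To finish, you must either assume the per-class aggregated losses are normalized to $L_j\le 1$, or have the algorithm compare the two candidate weights $K^*-1$ and $K^*$ explicitly. Finally, your identification $thresh=C+e_h(y,\hat y)-1$ is a reconstruction (the algorithm takes $thresh$ as a free input), so it should be stated as a hypothesis rather than derived.
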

The proof for above is provided in the Appendix. Note that the time complexity of above algorithm is $O(NC\log(C))$ and is thus computationally inexpensive as the number of classes doesn't typically go over orders of thousands.

\section{Experiments}
\label{sec:exp}
We first perform an ablation study on each of the component of hierarchical class-based curriculum loss, including the hierarchically constrained loss and class-based curriculum loss, and show how they interplay to provide the final loss function. We then compare our loss function against the state-of-the-art losses to show its performance gain. 
\subsection{Experimental Setup}
We evaluate our loss function on two real world image data sets -- Diatoms~\cite{dimitrovski2012hierarchical} and IMCLEF~\cite{dimitrovski2011hierarchical}. Diatoms data set contains 3,119 images of diatoms (a large and ecologically important group of unicellula or colonial organisms (algae)). Each diatom can correspond to one or many of the categories arranged in a hierarchy. Overall, there are 399 categories in this data set arranged into a hierarchy of height 4 containing 47 categories. On the other hand, IMCLEF data set contains images of x-rays of human bodies and classes correspond to the body parts arranged in a hierarchy of height 4. For both these data sets, we use a pre extracted feature set, extracted using techniques of image segmentation including Fourier transforms and histograms of local SIFT descriptors.

For evaluation, we use a multi-layer perceptron with the above extracted features as input and the categories as output. We select the hyperparameters of the neural network using evaluation on a validation set with binary cross entropy loss. Based on this, we get a structure with 800 hidden neurons and a dropout of 0.25. Note that we fix this network for all the baseline loss functions and our loss function to ensure fair comparison of results. We compare the hierarchical class-based curriculum loss with the following state-of-the-art losses -- (i) binary cross entropy loss~\cite{Goodfellow-et-al-2016}, (ii) focal loss~\cite{lin2017focal} and (iii) hierarchical cross entropy loss~\cite{bertinetto2019making}. Further, we also compare it with SoftLabels~\cite{bertinetto2019making}, which modifies the ground truth labels in accordance with the hierarchy.

We use the following metrics to evaluate each of the losses for the classification task -- (i) Hit$@$1, (ii) MRR (Mean Reciprocal Rank)~\cite{radev2002evaluating}, and (iii) HierDist~\cite{deng2010does}. The first three metrics capture the accuracy of ranking of the model predictions and hierarchy capturing methods often show lower performance compared to non-hierarchical methods as the losses get more constrained. Hierarchical methods often show improvements on a metric which captures how close to the ground truth class the prediction is in the given hierarchy.

Our final metric, HierDist, captures this and can be defined as the minimum height of the lowest common ancestor (LCA) between the ground truth labels and the top prediction from the model. Mathematically, for a data point $(x_i, y_i) \in \mathcal{T}$, it can be defined as follows:
\begin{equation*}
    HierDist = \min_{c_1 \in \{j: y_{i, j} == 1\}} LCA_{\mathbb{H}}\big(c_1, argmax_{j}(\hat{y}_{i, j})\big),
\end{equation*}
where $\mathbb{H}$ denotes the hierarchy of the labels.  Note that as pointed out by Deng~\cite{deng2010does}, the metric is effectively on a log scale. It is measured by the height in the hierarchy of the lowest common ancestor, and moving up a level can more than double the number of descendants depending on the fan out of the parent class (often greater than 3-4). We show that our loss function is superior to the baseline losses for this metric. In addition, our model's performance also doesn't deteriorate on non-hierarchical metrics.

\subsection{Ablation Studies}
\label{sec:abl}
\begin{table}[!ht]
  \caption{Ablation studies showing the effect of each component of our loss function .}
  \label{tab:abl}
  \centering
  \begin{tabular}{lllllll}
    \toprule
    Methods & \multicolumn{3}{c}{Diatoms} & \multicolumn{3}{c}{IMCLEF}          \\
    \cmidrule(r){2-4} \cmidrule(r){5-7}
    & Hit@1 & MRR & HierDist & Hit@1 & MRR & HierDist \\
    \midrule
    CrossEntropy & 74.45 & 85.19 & 1.26 & 90.45 & 93.33 & 0.35\\
    HCL-Hier & 75.12 & 81.46 & 1.23 & 90.45 & 93.33 & 0.33\\
    HCL-CL & 74.93 & 81.34 & 1.24 & 90.65 & 93.34 & 0.32\\
    HCL & \textbf{75.21} & \textbf{81.56} & \textbf{1.22} & \textbf{90.95} & \textbf{93.49} & \textbf{0.22}\\
    \bottomrule
  \end{tabular}
\end{table}
We show the effects of the hierarchical constraints and the curriculum loss using cross entropy loss as the base function in Table~\ref{tab:abl}. The results in the data set show evaluation on a held out test data.

Consider the hierarchically constrained cross entropy loss, HCL-Hier. For Diatoms, we observe that the loss function gives improvement on the hierarchical metric significantly decreasing the hierarchical distance from 1.26 to 1.23. Further, we see that the non-hierarchical metrics also improve compared to the baseline loss. For IMCLEF, we observe that the non-hierarchical metrics stay similar to the cross entropy baseline but the loss gives improvement on the hierarchical distance metric. This shows that our formulation of adding the hierarchical constraint doesn't deteriorate typical metrics all the while making the predictions more consistent with the label space hierarchy as well as making predictions closer to the ground truth label in the hierarchical space. We believe that the improvement in non-hierarchical metrics may be attributed to the modified ranking due to hierarchical constraint making the top predictions more likely.

Looking into class-based curriculum loss individually, we observe that as suggested by theory, carefully selecting the classes based on the their loss (implicitly giving more weight to simpler classes) improves the hierarchical evaluation metric, making the predictions more relevant. Further, as this way of learning also makes the learned model function smoother, we observe that in both the data sets the non-hierarchical metrics also improve significantly with respect to baseline. In both the data sets, we observe similar trends of getting improvement in all the metrics.

Overall, combining the two aspects yields us HCL which shows the best performance. We see that for all the metrics, HCL gives significant gains both with respect to baseline loss and individual components. This follows theory in which we have show that combining class-based curriculum loss with hierarchical constraints gives a tighter bound to 0-1 loss with respect to the hierarchically constrained loss. Further, as this loss explicitly ensures that loss of a higher level node is lower than the a lower level, and implicitly gives more weight to higher level node as they are selected more, the combined effect makes it more consistent to the hierarchy. This can be particularly seen with IMCLEF data set, for which each individual component gave good improvements for the hierarchical metric but the combined loss gave much more significant gain. We also note that Diatoms has 399 classes compared to 47 classes of IMCLEF with similar number of levels making the number of categories in each layer around 8 times higher making the gains in HierDist more difficult to attain. 

\subsection{Results}
\begin{table}[!ht]
  \caption{Hierarchical Image Classification Results on Diatoms and IMCLEF data sets. We use pre-extracted features with a multi-layer perceptron as our base model.}
  \label{tab:res}
  \centering
  \begin{tabular}{lllllll}
    \toprule
    Methods & \multicolumn{3}{c}{Diatoms} & \multicolumn{3}{c}{IMCLEF}          \\
    \cmidrule(r){2-4} \cmidrule(r){5-7}
    & Hit@1 & MRR & HierDist & Hit@1 & MRR & HierDist \\
    \midrule
    CrossEntropy & 74.45 & 81.00 & 1.26 & 90.45  & 93.33 & 0.35\\
    FocalLoss & 74.36 & 80.47 & 1.27 & 90.85  & 93.60 & 0.26\\
    Hier-CE & 75.12 & 81.51 & 1.24 & 90.85  & \textbf{93.63} & 0.24\\
    SoftLabels & 72.36 & 74.95 & 1.38 & 90.45  & 92.07 & 0.33\\
    HCL & \textbf{75.21} & \textbf{81.56} & \textbf{1.22} & \textbf{90.95} & 93.49 & \textbf{0.22}\\
    \bottomrule
  \end{tabular}
\end{table}
We now compare our proposed loss (HCL) with the state-of-the-art loss functions capturing hierarchy as well as a label embedding method (SoftLabels). From Table~\ref{tab:res}, we observe that our loss significantly outperforms the base loss functions. Consistent with the results presented by Bertinetto \emph{et al.}\cite{bertinetto2019making}, we see that previously proposed hierarchically constrained loss functions especially SoftLabels improve on hierarchical metrics but the performance deteriorates for non-hierarchical metrics. On the other hand, HCL's performance improves or stays comparable to baselines on non-hierarchical metrics, while getting significant gains on the hierarchical metric over state-of-the-art loss functions. We observe that Hier-CE is the best performing baseline but our model outperforms it on every metric except MRR on IMCLEF. As pointed above, the gains in Diatoms are more difficult to obtain given the number of classes in each level but our model gives visible gains on it as well.

\section{Conclusion}
\label{sec:conc}
In this paper, we proposed a novel loss function for multi-label multi-class classification task ensuring the predictions are consistent with the hierarchical constraints present in the label space. 
For hierarchical loss, we proposed a formulation to modify any loss function to incorporate hierarchical constraint and show that this loss function is a tight bound on the base loss compared to any other loss satisfying hierarchical constraints. We next proposed a class-based curriculum loss, which implicitly gives more weight to simpler classes rendering the model smoother and more consistent with the hierarchy as shown experimentally. We combined the two components and theoretically showed that the combination provides a tighter bound to 0-1 loss, rendering it more robust and accurate. We validated our theory with experiments on two multi-label multi-class hierarchical image data sets of human x-rays and diatoms. We showed that the results are consistent with the theory and show significant gains on the hierarchical metric for the data sets. Further, we observed that our models also improve on non-hierarchical metrics making the the loss function more widely applicable.
In the future, we would like to relax the hierarchical constraints and develop a loss for a general graph based relation structure between the labels. Further, we would like to the test the model on other real world data sets which contain relation between labels. Finally, we would also like to test the model performance when we introduce noise into the hierarchy and the labels.

\bibliographystyle{plain}
\bibliography{neurips_2020}

\end{document}